\newcommand{\N}{\mathbb{N}}
\newcommand{\Q}{\tilde{Q}}
\newcommand{\s}{\tilde{s}}
\newcommand{\tk}{\tilde{k}}
\newcommand{\sgn}{\mathrm{sign}}
\DeclarePairedDelimiter\floor{\lfloor}{\rfloor}
\newtheorem{thm}{Theorem}
\begin{document}

\title{Quantization and Training of Low Bit-Width Convolutional Neural Networks for Object Detection}

%

\author[1]{Penghang~Yin}
\author[2]{Shuai~Zhang\thanks{Equal contribution.}}
\author[2]{Yingyong~Qi}
\author[2]{Jack~Xin} 
\affil[1]{\footnotesize Department of Mathematics, University of California, Los Angeles. Email: yph@ucla.edu.}
\affil[2]{\footnotesize Department of Mathematics, University of California, Irvine. Email: (szhang3, jack.xin, yqi)@uci.edu. }
\date{}
\maketitle
\begin{abstract}
We present LBW-Net, an efficient optimization based method for quantization and training of the low bit-width convolutional neural networks (CNNs). Specifically, we quantize the weights to zero or powers of two by minimizing the Euclidean distance between full-precision weights and quantized weights during backpropagation. We characterize the combinatorial nature of the low bit-width quantization problem. For 2-bit (ternary) CNNs, the quantization of $N$ weights can be done by an exact formula in $O(N\log N)$ complexity. When the bit-width is three and above, we further propose a semi-analytical thresholding scheme with a single free parameter for quantization that is computationally inexpensive. The free parameter is further determined by network retraining and object detection tests. LBW-Net has several desirable advantages over full-precision CNNs, including considerable memory savings, energy efficiency, and faster deployment. Our experiments on PASCAL VOC dataset show that compared with its 32-bit floating-point counterpart, the performance of the 6-bit LBW-Net is nearly lossless in the object detection tasks, and can even do better in some real world visual scenes, while empirically enjoying more than 4$\times$ faster deployment.
\end{abstract}

\section{Introduction}
Deep convolutional neural networks (CNNs) have demonstrated superior performance in various
computer vision tasks \cite{rfcn_16, imagenet_12, dl_15,zipcode_89,mnist_98,ssd_15,faster_rcnn_15,vgg_14,deeper_15}. However deep CNNs typically have hundreds of
millions of trainable parameters which easily take up hundreds of megabytes of memory, and billions
of FLOPs for a single inference. This poses a significant challenge for the deployment of deep CNNs
on small devices with limited memory storage and computing power such as mobile phones. To address this issue, recent efforts have been made to compress the model size and train neural networks with heavily quantized weights, activations, and gradients \cite{bc_15, bnn_16,surgery_16,tq_16,qnn_16,twn_16,xnor_16,entropy_17,inq_17,dorefa_16,ttq_16}, which demand less storage and fewer FLOPs for deployment. These models include BinaryConnect\cite{bc_15}, BinaryNet\cite{bnn_16}, XNOR-Net \cite{xnor_16}, TWN \cite{twn_16}, TTQ \cite{ttq_16}, DoReFa-Net\cite{dorefa_16} and QNN \cite{qnn_16}, to name a few. In particular, binary (1-bit) and ternary (2-bit) weight models not only enable high model compression rate, but also eliminate the need of most floating-point multiplications during forward and backward propagations, which shows promise to resolve the problem. Compared with binary models, ternary weight networks such as TWN strike a better balance between compression rate and accuracy. It has been shown that ternary weight CNNs \cite{twn_16} can achieve nearly lossless accuracy on MNIST\cite{mnist_98} and CIFAR-10 \cite{cifar_09} benchmark datasets. Yet with fully ternarized weights, there is still noticeable drop in performance on larger datasets like ImageNet \cite{imagenet_09}, which suggests the necessity of relatively wider bit-width models with stronger model capacity for challenging tasks. 

An incremental network quantization strategy (INQ) is proposed in \cite{inq_17} for converting pre-trained full-precision CNNs into low bit-width versions whose weights are either zero or powers of two. A $b$ bit-width model can have $2^{b-1}+1$ distinct candidate values, in which 2 bits are used for representing the zero and the signs, while the remaining $b-2$ bits for the powers. More precisely, the parameters are constrained to $2^s\times\{0,\pm 2^{1-2^{b-2}}, \pm 2^{2-2^{b-2}}, \dots, \pm 1\}$ associated with a layerwise scaling factor $2^s$, $s$ an integer depending only on the weight maximum in the layer. At inference time, the original floating-point multiplication operations can be replaced by faster and cheaper binary bit shifting. The quantization scheme of \cite{inq_17} is however heuristic.

In this paper, we present the exact solution of the general $b$-bit approximation problem of a real weight vector $W^f$ in the least squares sense. If $b=2$ and the dimension of $W^f$ is $N$, the computational complexity of the 2 bit solution is $O(N\log N)$. At $b\geq 3$, the combinatorial nature of the solution renders direct computation too expensive for large scale tasks. We shall develop a semi-analytical quantization scheme involving a single adjustable parameter $\mu$ to set up the quantization levels. The exponent $s$ in the scaling factor can be calculated analytically from $\mu$ and the numbers of the downward sorted weight components between quantization levels. If the weight vector comes from a Gaussian ensemble, the parameter $\mu$ can be estimated analytically. However, we found that the weight vectors in CNNs (in particular ResNet) are strongly non-Gaussian. 
In this paper, $\mu$ is determined based on the object detection performance after retraining the network. This seems to be a natural choice in general as quantization is often part of a larger computer vision problem as is here. 
Therefore, the optimal parameter $\mu$ should not be decided by approximation (the least squares problem) errors alone. Indeed, we found that at $b\geq 4$, $\mu = \frac{3}{4} \|W^f\|_{\infty}$ gives the best detection performance, which suggests that a percentage of the large weights plays a key role in representing the image features and should be encoded during quantization. 

Network retraining is necessary after quantization as a way for the system to adjust and absorb the resulting errors. Besides warm start, INQ \cite{inq_17} requires a careful layerwise partitioning and grouping of the weights which are then quantized and re-trained incrementally group by group rather than having all weights updated at once. Due to both classification and detection networks involved in this work, we opted for a simpler retraining method, a variant of the projected stochastic gradient descent (SGD) method (see \cite{twn_16} and references therein). As a result, our LBW-Net can be trained either from scratch or a partial warm start. During each iteration, besides forward and backward propagations, only an additional low cost thresholding (projection) step is needed to quantize the full-precision parameters to zero or powers of two. We train LBW-Net with randomly initialized weights in the detection network (R-FCN in \cite{rfcn_16}), and pre-trained weights in ResNet \cite{resnet_15}. We conduct object detection experiments on PASCAL VOC data sets \cite{pascal_10} as in \cite{rfcn_16,faster_rcnn_15}. We found that at bit-width $b=6$, the accuracies of the quantized networks are well within $1\%$ of those of their 32-bit floating-point counterparts on both ResNet-50 and ResNet-101 backbone architectures. In some complex real world visual scenes, the 6-bit network even detects persons missed by the full-precision network. 

The rest of the paper is organized as follows. 
In section 2, we construct the exact solution of the general low bit-width approximation problem and present our semi-analytical quantization scheme with a single adjustable parameter $\mu$. We also outline the training algorithm and the choice of $\mu$.  
In section 3, we describe our experiments, the datasets, 
the object detection results, the non-Gaussian and sparsity properties of the floating weights in training. In section 4, we conclude with remarks on future work. 

\section{Training low bit-width convolutional neural networks}

\subsection{Weight quantization at low bit-width}

For general quantization problem, we seek to minimize the Euclidean distance between the given full-precision weight vector $W^f$ and quantized weight vector $W^q$, which is formulated as the following optimization problem:
$$
\min_{W^q} \|W^q - W^f\|^2 \quad \mbox{subject to} \quad  W^q\in\mathcal{Q},
$$
where $\mathcal{Q}$ is the set of quantized states. To quantize the full-precision weights into low-precision ones of $b$ bits ($b\geq 2$), we constrain the quantized weights to the value set of $2^s\times\{0,\pm 2^{1-n}, \pm 2^{2-n},\dots, \pm 1\}$ for some integer $s\in\mathbb{Z}$, where $n = 2^{b-2}$ and $2^s$ serves as the scaling factor. The minimal distance problem becomes:

\begin{equation}\label{model:prox}
(s^*, Q^*) = \arg\min_{s\in\mathbb{Z},Q} \|2^s Q - W^f\|^2 \quad \mbox{subject to} \quad  Q_i\in\{0, \pm 2^{1-n},\dots, \pm 1\}.
\end{equation}
Then the optimal quantized weight vector is given by $2^{s^*}Q^*$. A precise characterization of (\ref{model:prox}) is as follows.

\begin{thm}\label{thm:exact}
Let  $b\geq 2$, $n = 2^{b-2}$, and $k_0, \dots, k_{n-1}\in\N$. Suppose that $W^f_{[k_0]}$ keeps the $k_0$ largest components in magnitude of $W^f$ and zeros out the other components; $W^f_{[k_1]}$ extracts the next $k_1$ largest components and zeros out the other components, and so on. The solution $Q^*$ to (\ref{model:prox}) is: 
$$
Q^* = \sum_{t = 0}^{n-1}\sgn(W^f_{[k^*_t]})2^{-t},
$$
where 
\begin{equation}\label{model:optimal_k}
(k^*_0, \dots, k^*_{n-1}) = \arg\min_{k_0,\dots,k_{n-1}\in \N} \; g\left( \sum_{t=0}^{n-1} \|W_{[k_t]}^f\|_1 2^{-t}, \sum_{i=0}^{n-1} k_t 2^{-2t} \right) 
\end{equation}
with 

$$g(u,v):= v\left (2^{\floor{\log_2 \frac{4u}{3v}}}-\frac{u}{v}\right )^2 - \frac{u^2}{v}.$$ 
\medskip

The bracket in $g(u,v)$ is the floor operation. 
Moreover, the optimal power of scaling is: 

$$
s^*=\floor{\log_2 \frac{4 \sum_{t=0}^{n-1}2^{-t}\|W_{[k^*_t]}^f\|_1}{3\sum_{t=0}^{n-1} k^*_t 2^{-2t}}}.
$$
\end{thm}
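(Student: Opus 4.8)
The plan is to peel the minimization (\ref{model:prox}) apart one layer at a time: first fix the scaling $2^s$ and the magnitude pattern and optimize the signs, then fix the scaling and optimize the magnitude pattern, then optimize the integer $s$, and finally optimize the bin counts. Since $2^s>0$, for any fixed choice of $|Q_i|$ the sign of $Q_i$ minimizing $(2^s Q_i - W^f_i)^2$ is $\sgn(W^f_i)$, so the problem reduces to assigning each coordinate a magnitude in $\{0, 2^{-(n-1)},\dots,2^{-1},1\}$. The crucial structural reduction is a rearrangement (exchange) argument: if two coordinates satisfy $|W^f_i|\ge|W^f_j|$ but are assigned magnitudes $a<c$, then swapping their magnitudes changes the squared error by exactly $-2\cdot 2^s(c-a)(|W^f_i|-|W^f_j|)\le 0$, so the swap never hurts. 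Iterating, some optimal $Q$ has magnitudes non-increasing along the coordinates sorted by $|W^f|$; as only finitely many levels $2^{-t}$ ($t=0,\dots,n-1$) and $0$ are available, a monotone assignment is precisely a block assignment: the $k_0$ largest weights get $2^0$, the next $k_1$ get $2^{-1}$, and so on, the smallest being zeroed. This is exactly $Q=\sum_{t=0}^{n-1}\sgn(W^f_{[k_t]})2^{-t}$, and it handles the level $0$ uniformly (taking $a=0$ in the exchange), forcing the zeroed coordinates to be the smallest ones.

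With the signs and block structure in hand, I would expand the objective. Bin $t$ contributes $\sum_{i\in\text{bin }t}(2^{s-t}-|W^f_i|)^2$ and the zeroed coordinates contribute $\sum (W^f_i)^2$; collecting terms and writing $\|W^f_{[k_t]}\|_1$ for the $\ell_1$-mass of bin $t$, the total error equals $\|W^f\|^2 + v\,2^{2s} - 2u\,2^s$, where $u=\sum_t 2^{-t}\|W^f_{[k_t]}\|_1$ and $v=\sum_t k_t 2^{-2t}$. As $\|W^f\|^2$ is constant, minimizing (\ref{model:prox}) is equivalent to minimizing $F(s,k):=v\,2^{2s}-2u\,2^s$.

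For fixed bin counts (hence fixed $u,v>0$), $F$ is a convex parabola in $x=2^s$ with continuous minimizer $u/v$, but $x$ is restricted to powers of two. I would compare consecutive levels: $F(s+1)-F(s)=2^s(3v\,2^s-2u)$, which is negative exactly when $2^s<\tfrac{2u}{3v}$ and positive when $2^s>\tfrac{2u}{3v}$. Hence the optimum is the unique $s^*$ with $\tfrac{2u}{3v}\le 2^{s^*}\le\tfrac{4u}{3v}$, equivalently $2^{s^*}\le\tfrac{4u}{3v}<2^{s^*+1}$, i.e. $s^*=\floor{\log_2\tfrac{4u}{3v}}$; the factor $4/3$ is the geometric (log-scale) midpoint between adjacent dyadic levels and is what separates the discrete minimizer from a naive rounding of $\log_2(u/v)$. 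Substituting this $s^*$ and completing the square yields the minimal error over $s$ as $g(u,v)=v(2^{s^*}-u/v)^2-u^2/v$, exactly the stated $g$.

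Finally, since the scaling can be optimized independently for each admissible $(k_0,\dots,k_{n-1})$, one has $\min_{s,k}F=\min_k g(u(k),v(k))$, which is (\ref{model:optimal_k}); reading off $s^*$ at the optimal $k$ gives the displayed exponent. I expect the \emph{main obstacle} to be the rearrangement step: one must argue that the exchange inequality, including the case where an assigned magnitude is $0$, truly forces the monotone/block structure and thereby collapses the $(2n+1)^N$ combinatorial search to a search over only $n$ thresholds. Everything after that is completing the square plus a one-dimensional dyadic comparison. A minor edge case to dispose of is $v=0$ (all weights quantized to zero), where $g$ is undefined; this is the trivial solution $Q=0$ with error $\|W^f\|^2$, which is suboptimal unless $W^f=0$ and so may be excluded.
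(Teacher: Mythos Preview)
Your proposal is correct and follows essentially the same route as the paper's proof: expand $\|2^sQ-W^f\|^2$ into $\|W^f\|^2+v\,2^{2s}-2u\,2^s$, optimize the integer $s$ to obtain $s^*=\lfloor\log_2\tfrac{4u}{3v}\rfloor$ and the function $g$, then minimize over the bin counts. The only difference is expository: the paper compresses your sign step and your exchange/rearrangement argument into the single inequality $|\langle Q,W^f\rangle|\le\sum_t\|W^f_{[k_t]}\|_1 2^{-t}$ (declared ``easy to show''), and it obtains $s^*$ by appealing to ``symmetry of the parabola'' and the nearest power of two rather than your explicit difference test $F(s+1)-F(s)=2^s(3v\,2^s-2u)$; your version is more detailed but not a different method.
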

In Theorem 1, we have assumed that the components of $W^f$ have no ties in 
magnitudes, as such situation occurs with zero probability for random floating vectors from continuous distributions. To solve the problem (\ref{model:prox}) by Theorem \ref{thm:exact}, we need to sort the elements of $W^f$ in magnitude, and find the optimal numbers of weights $k_0^*,\dots,k^*_{n-1}$ at $n$ quantization levels by solving (\ref{model:optimal_k}). We can then obtain the optimal scaling factor $2^{s^*}$. The largest $k_0^*$ weights (in magnitude) are quantized to $\pm2^{s^*}$, and the next largest $k_1^*$ weights to $\pm 2^{s^*-1}$, and so on. Finally, all the remaining small weights are pruned to 0.

The subproblem (\ref{model:optimal_k}) is intrinsically combinatorial however. In the simplest case $b=2$ with ternary weight networks, by Theorem \ref{thm:exact}, $k_0^* = \arg\min_{k_0\in\N} \; g(\|W^f_{[k_0]}\|_1,k_0)$, and the solution to (\ref{model:prox}) is given by $Q^* = \sgn(W^f_{[k_0^*]})$, $s^*=\floor*{\log_2 \frac{4\, \|W^f_{[k_0^*]}\|_1}{3\, k_0^*}}$.
Therefore, the weight ternarization mainly involves sorting magnitudes of the elements in $W^f$ and computing a cumulative sum of the sorted sequence, which requires a computational complexity of $O(N\log(N))$, where $N$ is number of entries in $W^f$. 
When $b>2$ and $n>1$, solving (\ref{model:optimal_k}) by direct enumeration  becomes computationally too expensive for large scale problems such as convolutional neural networks and thus impractical. Hereby we propose a low-cost approximation of $Q^*$, motivated by the  empirical quantization schemes in \cite{twn_16,inq_17}. To this end, by selecting a proper threshold value $\mu$, we set 
\medskip

\begin{equation}\label{approx}
\Q_i^* = 
\begin{cases}
0 & \mbox{if} \quad |W_i^f|< \frac{2^{2-n}}{3}\mu,\\
\sgn(W_i^f)2^{1-n} & \mbox{if} \quad \frac{2^{2-n}}{3}\mu \leq|W_i^f|< 2^{2-n}\mu \\
\sgn(W_i^f)2^{-t} & \mbox{if} \quad 2^{-t}\mu \leq|W_i^f|< 2^{-t+1}\mu, \; t = 1,\dots,n-2,\\
\sgn(W_i^f) & \mbox{if} \quad \mu \leq |W^f_i|.
\end{cases}
\end{equation}
\medskip

Note that the case $t = n-1$ in (\ref{approx}) needs special treatment because one of the neighboring quantized values is 0. The parameter $\mu$ is the only free parameter in (\ref{approx}).  

\begin{thm}\label{thm:approx}
The optimal power $\s^*$ of the scaling factor with respect to the approximate $\Q^*$ in (\ref{approx}) is

\begin{equation}\label{exactsf}
\s^* = \floor*{\log_2\frac{4\sum_{t=0}^{n-1} 2^{-t}\|W_{[\tk^*_t]}^f\|_1}{3\sum_{t=0}^{n-1} \tk^*_t 2^{-2t}}}.
\end{equation}
\medskip

Here $W_{[\tk^*_t]}$ is defined as in Theorem \ref{thm:exact}, and $\tk^*_t$ is the number of entries of $W^f$ in the $t$-th largest group according to the division of (\ref{approx}).
\end{thm}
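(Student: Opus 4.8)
The plan is to recognize that Theorem~\ref{thm:approx} asks only for the optimal integer power $s$ once the quantized direction $\Q^*$ has been frozen by the thresholding rule (\ref{approx}); this is exactly the scaling-factor subproblem already solved inside the proof of Theorem~\ref{thm:exact}, now specialized to the particular group sizes $\tk^*_t$ coming from (\ref{approx}) instead of the optimal $k^*_t$. So the entire argument reduces to a one-dimensional minimization over $s$.

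First I would hold $\Q^*$ fixed and write the objective as a function of the continuous variable $x = 2^s$:
$$\|x\,\Q^* - W^f\|^2 = x^2\sum_i (\Q_i^*)^2 \;-\; 2x\sum_i \Q_i^* W_i^f \;+\; \|W^f\|^2,$$
a strictly convex parabola in $x$ since $\sum_i (\Q_i^*)^2 > 0$. Because (\ref{approx}) sets $\Q_i^* = \sgn(W_i^f)\,2^{-t}$ on the $t$-th group, the sign of each nonzero $\Q_i^*$ matches that of $W_i^f$, so $\Q_i^* W_i^f = 2^{-t}|W_i^f| \geq 0$. Summing group by group gives $\sum_i \Q_i^* W_i^f = \sum_{t=0}^{n-1} 2^{-t}\|W_{[\tk^*_t]}^f\|_1 =: u$ and $\sum_i (\Q_i^*)^2 = \sum_{t=0}^{n-1} \tk^*_t 2^{-2t} =: v$, whence the unconstrained minimizer of the parabola is $\rho := u/v$.

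Next I would pass from the continuous minimizer back to the feasible set $\{2^s : s\in\mathbb{Z}\}$. Since the objective depends on $x$ only through $(x-\rho)^2$, the optimal $s$ is the one whose power of two lies closest to $\rho$. Comparing $2^s$ against its two neighbours via the midpoints $\tfrac12(2^{s-1}+2^s)=3\cdot 2^{s-2}$ and $\tfrac12(2^s+2^{s+1})=3\cdot 2^{s-1}$ shows that $2^s$ is optimal exactly when $3\cdot 2^{s-2} < \rho < 3\cdot 2^{s-1}$, i.e. $2^s < \tfrac{4\rho}{3} < 2^{s+1}$; taking base-$2$ logarithms and applying the floor yields $\s^* = \floor*{\log_2 \tfrac{4u}{3v}}$, which is (\ref{exactsf}).

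The calculation is routine and I do not anticipate a genuine obstacle; the only steps requiring care are (i) checking that the magnitude bands in (\ref{approx}) partition the sorted weights into the same consecutive-block structure as in Theorem~\ref{thm:exact}, so that $W_{[\tk^*_t]}^f$ is well defined and the sign alignment $\Q_i^* W_i^f \geq 0$ holds, and (ii) verifying that comparing squared distances to consecutive powers of two produces the midpoint $3\cdot 2^{s-1}$ and hence the constant $4/3$ inside the logarithm.
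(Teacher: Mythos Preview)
Your proposal is correct and follows essentially the same route as the paper: the paper states that Theorem~\ref{thm:approx} follows from the proof of Theorem~\ref{thm:exact}, and that proof proceeds exactly by completing the square in $2^s$ to obtain the parabola with vertex at $u/v$, then invoking symmetry to select the nearest power of two, yielding $\floor{\log_2(4u/3v)}$. Your explicit midpoint computation for the $4/3$ constant and your check that the sign alignment in (\ref{approx}) makes $\langle \Q^*, W^f\rangle = u$ (rather than merely $\leq u$) are precisely the details the paper leaves implicit when it says Theorem~\ref{thm:approx} follows.
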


We remark that the output of $\tilde{Q}^*$ consists of mostly the scaled signs, hence $\tilde{Q}^*$ resembles a ``phase factor''. On the other hand, the scaling factor $2^{\s^*}$ is the corresponding amplitude. Putting the two factors together, one can view the low bit-width weight approximation as an approximate polar decomposition of the real weight vector. The proof of Theorem 1 is in the appendix from which Theorem 2 follows.

\subsection{Training algorithm}
We used a projected SGD-like algorithm as in \cite{twn_16, xnor_16} for training LBW-Net. At each gradient-descent step, the minibatch gradient is evaluated at the quantized weights, and a scaled gradient is subtracted from the full-precision weights instead of the quantized weights per standard projected gradient method. The quantization is then done layer by layer by the formulas (\ref{approx}) and (\ref{exactsf}) with $\mu$ selected as $\frac{3}{4}\|W^f\|_\infty$ for each layer at bit-width 4 or above. To compute the optimal power $s^*$ in (\ref{exactsf}), we find it sufficient to use the partial sums $\sum_{t=0}^{3} 2^{-t}\|W_{[\tk^*_t]}^f\|_1$ and $\sum_{t=0}^{3} \tk^*_t 2^{-2t}$ instead, as the tail values are negligible. In addition, we adopted batch normalization \cite{bn_15}, adaptive learning rate, and Nesterov momentum \cite{nesterov_83} to promote training efficiency.

\section{Experiments} 
We implemented our LBW-Net with the R-FCN \cite{rfcn_16} structure on PASCAL VOC dataset which has $20$ object categories. Same as \cite{rfcn_16} , the training set is the union of VOC 2007 trainval and VOC 2012 trainval (“07+12”), and test results are evaluated on the VOC 2007 test set. So there are in total $16,551$ images with $40,058$ objects in the training set, and $4,952$ images in the test set. The performance of object detection is measured by mean Average Precision (mAP). \footnote{All mAP scores are computed with the Python version of the test codes provided by RCNN/Fast RCNN/Faster RCNN GitHub repositories.} Our experiments are carried out on Caffe \cite{caffe_14} with an Nvidia Titan X GPU under Linux system.

\subsection{R-FCN on PASCAL VOC}
We first employed ResNet-50 as the backbone network architecture for R-FCN. In the experiments, we tested $4, 5, 6$-bit LBW-Net and compared evaluation results with the corresponding 32-bit floating point models. For fair comparison, all these tests used the same initial weights, which are pre-trained convolutional feature maps from ResNet-50 while the weights in the other convolution layers are randomly initialized. A similar procedure is applied for experiments with ResNet-101. In \cite{entropy_17}, comparable results to ours were reported on ResNet-50 based detection. However, their method did not work on the deeper ResNet-101 based detection. Interestingly, although failed for ResNet-101 based detection, their approach succeeded in the classification task using Resnet-101, which suggests that quantization of detection networks is trickier in practice. 

In the R-FCN structure, there is no fully-connected layer. We quantized all convolutional layers with the same low bit-width quantization formula for each layer. 
\medskip

\begin{table}[ht]
\label{table: mAP}
\centering
\begin{tabularx}{.8\textwidth}{|c|X|c|X|}
  \hline			
  R-FCN, ResNet-50           & mAP & R-FCN, ResNet-101 & mAP \\ 
  \hline
  4-bit LBW  & $74.37\%$ 
& 4-bit LBW  & $76.79\%$ \\ 

  5-bit LBW  & $76.99\%$
& 5-bit LBW  & $77.83\%$   \\ 

  6-bit LBW  & $77.05\%$ 
& 6-bit LBW  & $78.24\%$   \\ 


  32-bit full-precision & $77.46\%$
& 32-bit full-precision & $78.94\%$  \\
  \hline  
\end{tabularx}
\bigskip
\caption{Object detection experiments on PASCAL VOC with R-FCN + ResNet-50/ResNet-101. Training set is VOC 07+12 trainval. The results are evaluated on VOC 07 test.}
\end{table}

\begin{figure}[H]
\begin{tabular}{cc}
\textbf{32-bit} & \textbf{6-bit} \\
\begin{minipage}[t]{0.45\linewidth}
\includegraphics[scale=0.2]{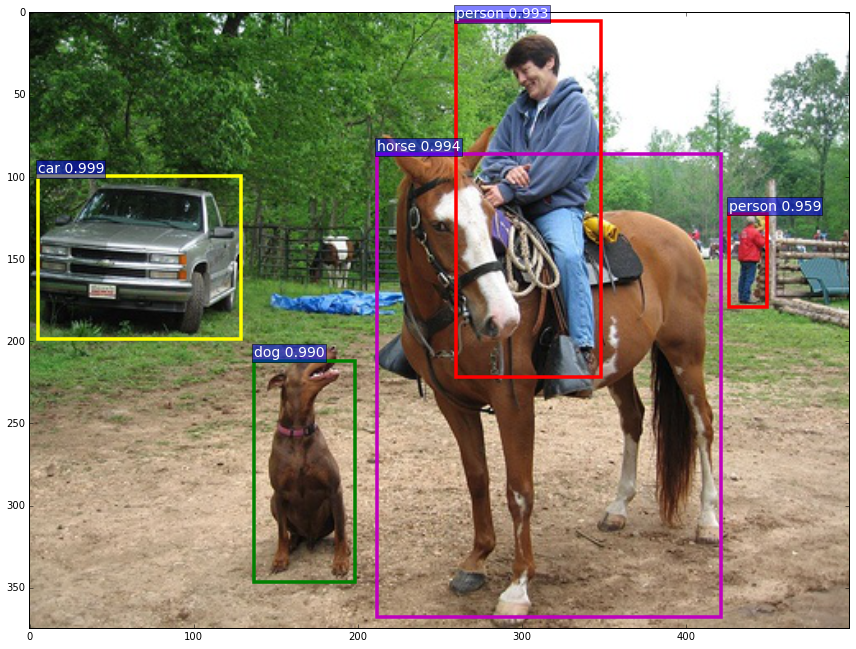}
\end{minipage}  &
\begin{minipage}[t]{0.45\linewidth}
\includegraphics[scale=0.2]{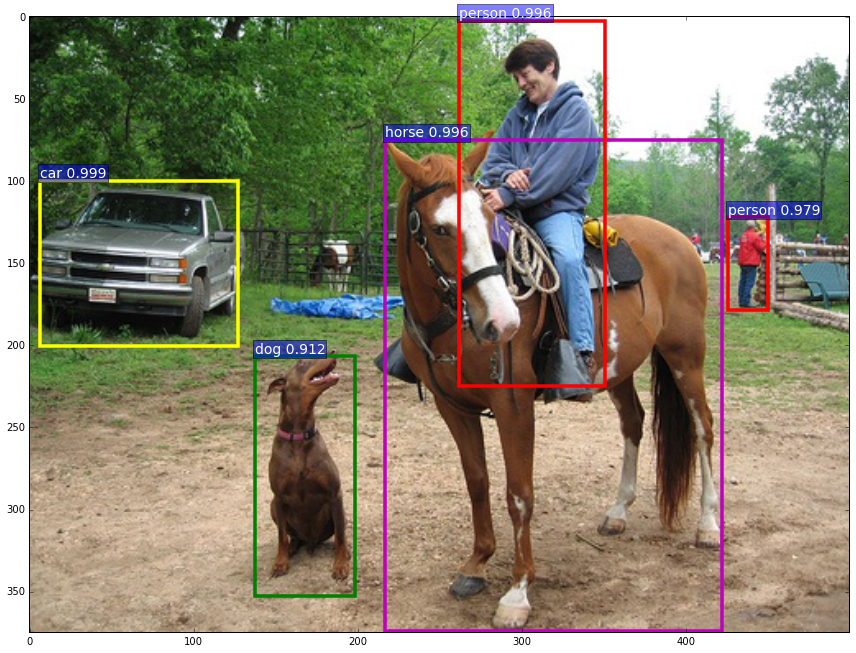}  
\end{minipage}  \\
\begin{minipage}[t]{0.45\linewidth}
\includegraphics[scale=0.2]{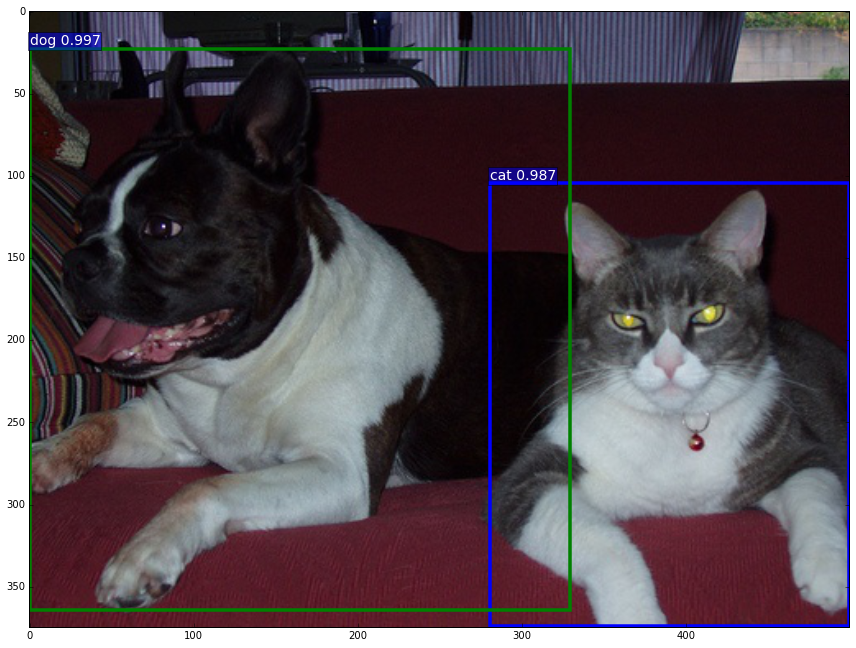}
\end{minipage}  &
\begin{minipage}[t]{0.45\linewidth}
\includegraphics[scale=0.2]{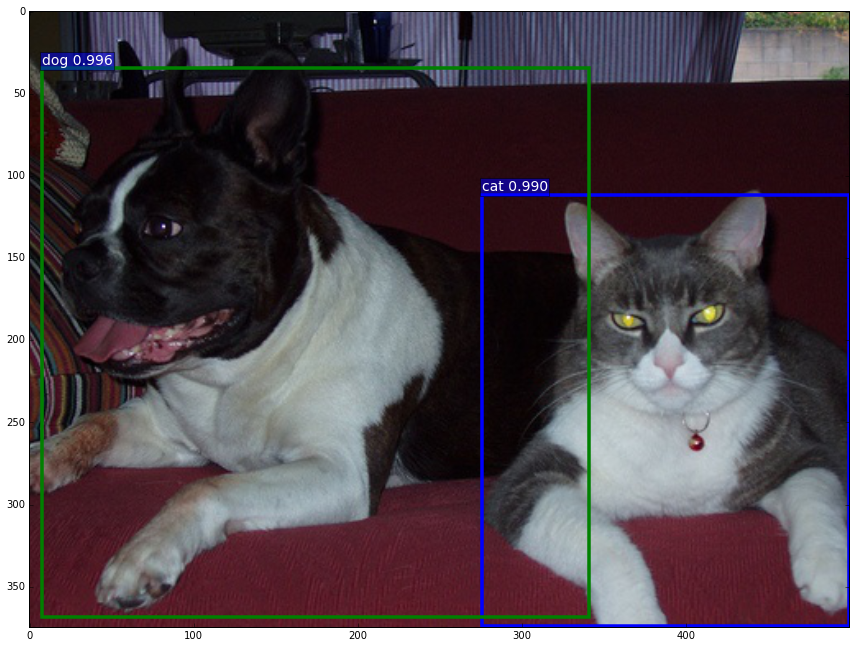}  
\end{minipage}  \\
\begin{minipage}[t]{0.45\linewidth}
\includegraphics[scale=0.2]{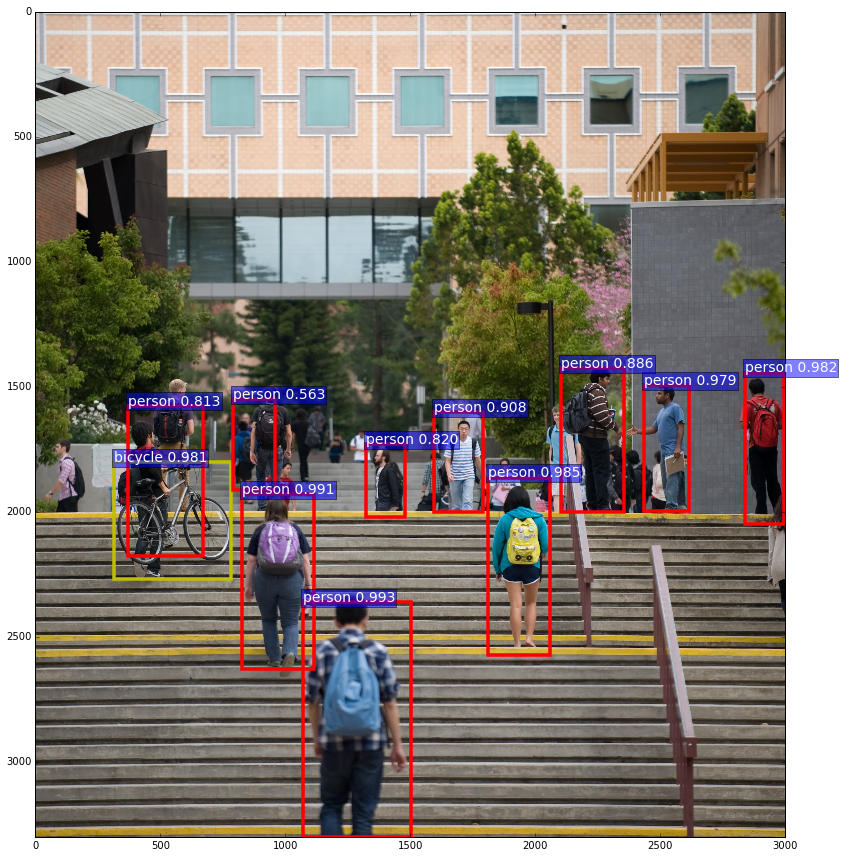}
\end{minipage}  &
\begin{minipage}[t]{0.45\linewidth}
\includegraphics[scale=0.2]{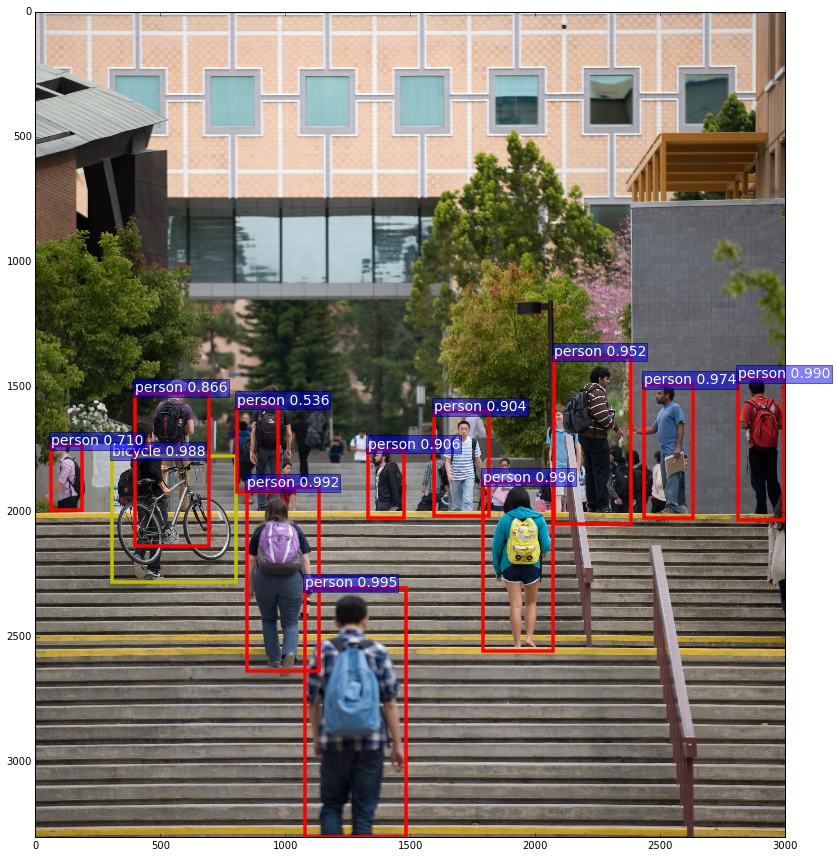}  
\end{minipage}  
\end{tabular}
\bigskip
\caption{Curated examples of 6-bit LBW detection results on 3 sample images, compared with those from the corresponding full precision model. The left columns are results of 32-bit full-precision model, while the right images come from 6-bit LBW model. The network is R-FCN + ResNet-50, and the training data is 2007+2012 trainval. The threshold value 0.5 is used for display.}
\label{figure: demo}
\end{figure}

Table 1 shows mAP results from our experiments. With larger bit-width, LBW models achieved higher mAP values, true for both R-FCN + ResNet-50 and R-FCN + ResNet-101. The models trained with the 6-bit LBW scheme almost approach the best mAP of 32-bit full precision models. Besides these quantitative measures, in Fig. \ref{figure: demo}, we illustrate detection accuracies using R-FCN + ResNet-50 via samples processed by 6-bit LBW in comparison with those by the `ground truth' full precision model. The first 2 photos are chosen from the 2007 Pascal VOC dataset and the third photo is taken at a university campus with a much more complicated visual scene. In the first 2 photos, both the 6-bit LBW and full precision models detected the major objects correctly, with nearly the same bounding box positions and high classification scores. In the third photo, the 6-bit LBW even surpassed the performance of the full precision model, by detecting a student at the very left side of the top staircase with a score of $0.710$. Also the 3rd student from the right (the student in the middle) on the top staircase is detected with a score of $0.952$ ($0.906$) by the 6 bit LBW vs. $0.886$ ($0.820$) by the full precision model. Interestingly, these three students are all side-viewed.    

At inference time, we have observed an immediate at least 4$\times$ speedup given by our 6-bit R-FCN model. For the three images shown in Fig. \ref{figure: demo}, the computing time are 0.507s, 0.441s, and 32.269s using a 32-bit R-FCN+ResNet-50 on GPU, while the costs are 0.098s, 0.106s and 6.113s respectively by our 6-bit counterpart. 

\subsection{Statistical Analysis of Weights}
In Fig. 2, we illustrate the weight distributions of two floating convolutional layers by histograms. The $p$-values of a standard hypothesis testing procedure in statistics on normality showed up very small (less than $10^{-5}$), indicating the strong non-Gaussian behavior of the floating weights in training. 
This phenomenon posed a challenge to the analytical effort of estimating the parameter $\mu$ in quantization using probability distribution functions as suggested for TWN \cite{resnet_15}. 

In Table 2 and Table 3
, 
we show the weight percentage distribution of two sample convolutional layers in R-FCN + ResNet50 between different magnitude levels of the quantization for low-bit width and full-precision models. The three low bit-width models involve truncation and encoding operations. The 6 bit-width columns appear to approach the 32-bit float columns on most rows. 
However, the percentages on the last three (two) rows under the  low-bit LBW models in Table 2 (3) are identical to each other and are much larger than the corresponding percentage in the full precision model. This shows that the trained low-bit LBW models captured rather well a small percentage of the large weights. In deep CNNs, the large magnitude weights occupy a small percentage yet have a significant impact on the model accuracy. That is why we chose the partition parameter $\mu$ to be near the maximum norm of the weights. 

It is worthwhile to note from the two tables that the 4-bit LBW can save lots of memory thanks to both low-bit weights and high sparsity. Over 82\% (58\%) of the weights are zeros in the convolutional residual block 
(RPN layer) of the R-FCN plus ResNet50 network. With the help of 'Mask' technology in circuit chip design, zero-valued weights will be skipped and the computational efficiency can be much improved. However, as shown in Table 1, the 4-bit LBW still suffers a few more percentages of accuracy loss than the 
5-bit and 6-bit models. 
The 6-bit LBW model approximates the feature representation capability of the full precision network the best with a sufficient number of smaller levels of quantized weights. For that reason, it almost recovers the performance of the full precision model on the test set. The 6-bit LBW model saves around $5.3\times$ weights memory with a small loss of accuracy. The memory savings and the near lossless accuracy of the 6-bit LBW 
may work well on a modern chip design where all multiplication operations in the convolutional layers can be replaced by bit-wise shift operations, thus highly improving the computing efficiency in applications.

\begin{figure}[ht]
\begin{tabular}{cc}
\textbf{Conv layer in residue block} & \textbf{RPN layer} [22] \\
\begin{minipage}[t]{0.5\linewidth}
\includegraphics[scale=0.5]{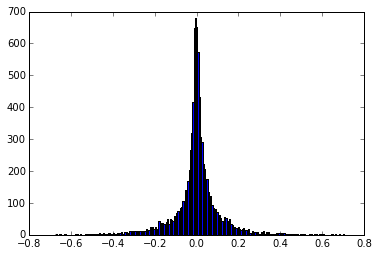}
\end{minipage}  &
\begin{minipage}[t]{0.5\linewidth}
\includegraphics[scale=0.5]{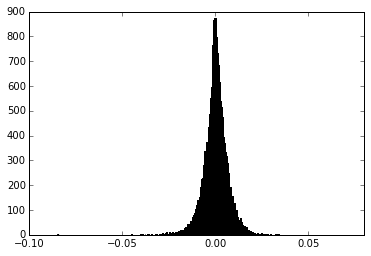}  
\end{minipage}  \\
$\text{Kurtosis} = 6.113$, \ $\text{Skewness} = -0.112$ 
& $\text{Kurtosis} = 9.398$, \ $\text{Skewness} = -0.481$ \\
\end{tabular}
\caption{Histograms of the float weights in 2 convolutional layers of 32-bit full-precision trained R-FCN + ResNet-50 model. For both of these 2 layers, the $p$-values of normal distribution hypothesis testing are extremely small, less than $10^{-5}$. Also the excess kurtosis measures are much larger than the value for normal distribution, which is $0$. Thus these weights are far from being normally distributed.}
\label{figure: hist}
\end{figure}

\begin{table}[ht]
\label{table: weights}
\centering
\begin{tabular}{|c|c|c|c|c|}
  \hline			
  R-FCN, ResNet-50 & 4-bit LBW & 5-bit LBW & 6-bit LBW & 32-bit full-precision \\
  \hline
  $|w| < 2^{-16}$              & $82.882\%$ & $10.072\%$ 
  & $0.030\%$ & $0$ \\ 
  \hline
  $2^{-16} \leq |w| < 2^{-15}$ & $0$        & $0$        
  & $0.060\%$ & $0.076\%$ \\
  \hline
  $2^{-15} \leq |w| < 2^{-14}$ & $0$        & $0$        
  & $0.141\%$ & $0.225\%$ \\
  \hline
  $2^{-14} \leq |w| < 2^{-13}$ & $0$        & $0$        
  & $0.233\%$ & $0.271\%$ \\
  \hline
  $2^{-13} \leq |w| < 2^{-12}$ & $0$        & $0$        
  & $0.486\%$ & $0.613\%$ \\
  \hline
  $2^{-12} \leq |w| < 2^{-11}$ & $0$        & $0$        
  & $0.922\%$ & $1.283\%$ \\
  \hline
  $2^{-11} \leq |w| < 2^{-10}$ & $0$        & $0$ 
  & $1.964\%$ & $2.610\%$ \\
  \hline
  $2^{-10} \leq |w| < 2^{-9}$  & $0$        & $0$ 
  & $3.776\%$ & $4.945\%$ \\
  \hline
  $2^{-9} \leq |w| < 2^{-8}$   & $0$        & $0$ 
  & $7.343\%$ & $9.524\%$ \\
  \hline
  $2^{-8} \leq |w| < 2^{-7}$   & $0$        & $18.392\%$ 
  & $13.509\%$ & $16.713\%$ \\
  \hline
  $2^{-7} \leq |w| < 2^{-6}$   & $0$        & $21.221\%$ 
  & $21.221\%$ & $23.581\%$ \\
  \hline
  $2^{-6} \leq |w| < 2^{-5}$   & $0$ 	    & $24.270\%$ 
  & $24.270\%$ & $22.993\%$ \\
  \hline
  $2^{-5} \leq |w| < 2^{-4}$   & $0$        & $17.706\%$ 
  & $17.706\%$ & $12.627\%$ \\
  \hline
  $2^{-4} \leq |w| < 2^{-3}$   & $15.479\%$ & $6.700\%$ 
  & $6.700\%$ & $3.784\%$ \\
  \hline
  $2^{-3} \leq |w| < 2^{-2}$   & $1.408\%$  & $1.408\%$ 
  & $1.408\%$ & $0.608\%$ \\
  \hline
  $2^{-2} \leq |w| < 2^{-1}$   & $0.228\%$  & $0.228\%$ 
  & $0.228\%$ & $0.098\%$ \\
  \hline
  $2^{-1} \leq |w|$            & $0.003\%$  & $0.003\%$ 
  & $0.003\%$ & $0$ \\
  \hline  
\end{tabular}
\medskip

\caption{Statistics of low-bit and full precision weights ($w$) of one convolutional residual block layer in R-FCN + ResNet-50 at different bit-widths. 
For 4, 5, 6-bit LBW models, the weights in the first row of partition are exactly equal to 0, and come from rounding down small floating weights during training.}
\end{table}

\begin{table}[ht]
\label{table: weights detection}
\centering
\begin{tabular}{|c|c|c|c|c|}
  \hline			
  R-FCN, ResNet-50 & 4-bit LBW & 5-bit LBW & 6-bit LBW & 32-bit full-precision \\
  \hline
  $|w| < 2^{-19}$              & $58.188\%$ & $4.000\%$ 
  & $0.016\%$ & $0.019\%$ \\ 
  \hline
  $2^{-19} \leq |w| < 2^{-18}$ & $0$        & $0$        
  & $0.031\%$ & $0.022\%$ \\
  \hline
  $2^{-18} \leq |w| < 2^{-17}$ & $0$        & $0$        
  & $0.047\%$ & $0.045\%$ \\
  \hline
  $2^{-17} \leq |w| < 2^{-16}$ & $0$        & $0$        
  & $0.095\%$ & $0.089\%$ \\
  \hline
  $2^{-16} \leq |w| < 2^{-15}$ & $0$        & $0$        
  & $0.185\%$ & $0.177\%$ \\
  \hline
  $2^{-15} \leq |w| < 2^{-14}$ & $0$        & $0$        
  & $0.370\%$ & $0.355\%$ \\
  \hline
  $2^{-14} \leq |w| < 2^{-13}$ & $0$        & $0$ 
  & $0.751\%$ & $0.714\%$ \\
  \hline
  $2^{-13} \leq |w| < 2^{-12}$  & $0$        & $0$ 
  & $1.501\%$ & $1.413\%$ \\
  \hline
  $2^{-12} \leq |w| < 2^{-11}$   & $0$        & $0$ 
  & $2.993\%$ & $2.836\%$ \\
  \hline
  $2^{-11} \leq |w| < 2^{-10}$   & $0$        & $7.949\%$ 
  & $5.952\%$ & $5.616\%$ \\
  \hline
  $2^{-10} \leq |w| < 2^{-9}$   & $0$        & $11.676\%$ 
  & $11.685\%$ & $11.061\%$ \\
  \hline
  $2^{-9} \leq |w| < 2^{-8}$   & $0$ 	    & $21.571\%$ 
  & $21.588\%$ & $20.625\%$ \\
  \hline
  $2^{-8} \leq |w| < 2^{-7}$   & $0$        & $31.553\%$ 
  & $31.539\%$ & $31.370\%$ \\
  \hline
  $2^{-7} \leq |w| < 2^{-6}$   & $39.837\%$ & $21.137\%$ 
  & $21.134\%$ & $23.257\%$ \\
  \hline
  $2^{-6} \leq |w| < 2^{-5}$   & $1.953\%$  & $2.093\%$ 
  & $2.091\%$ & $2.397\%$ \\
  \hline
  $2^{-5} \leq |w| < 2^{-4}$   & $0.022\%$  & $0.021\%$ 
  & $0.022\%$ & $0.004\%$ \\
  \hline
  $2^{-4} \leq |w|$            & $0.0001\%$  & $0.0001\%$ 
  & $0.0001\%$ & $0$ \\
  \hline  
\end{tabular}
\medskip

\caption{Statistics of low-bit and full precision weights ($w$) of one RPN layer in R-FCN + ResNet-50 at different bit-widths. For 4, 5, 6-bit LBW models, the weights in the first row of partition are exactly equal to 0, and come from rounding down small floating weights during training.}
\end{table}

\section{Concluding Remarks}
We discovered the exact solution of the general low-bit approximation problem of a real weight vector in the least squares sense, and proposed a low cost semi-analytical quantization scheme with a single adjustable parameter. This parameter is selected and optimized through training and testing on object detection data sets to approach the performance of the corresponding full precision model. The accuracy of our 6-bit width model is well-within 1\%  of the full precision model on PASCAL VOC data set, and can even outperform the full-precision model on real-world test images with complex visual scenes. Moreover, the deployment of our low-bit model appears to be more than 4$\times$ faster.  In future work, we plan to improve the low bit width models (especially the 4 bit-width model) further by exploring alternative training algorithms and refining our approximate quantization scheme.  

\subsubsection*{Acknowledgments}
This work was partially supported by NSF grants DMS-1522383 and IIS-1632935, and ONR grant N00014-16-1-7157.
 
\section*{Appendix}

\begin{proof}[Proof of Theorem \ref{thm:exact}] We first fix the number of entries in $Q$ quantized to $\pm2^{-t}$ to be $k_t$, $t = 0,\dots,n-1$. Then it is easy to show that
\begin{equation}\label{ineq}
\|Q\|^2 = \sum_{i=0}^{n-1} k_t 2^{-2t} \quad \mbox{and} \quad |\langle Q,W^f \rangle| \leq \sum_{t=0}^{n-1} \|W_{[k_t]}^f\|_1 2^{-t}.
\end{equation}
Therefore, for any $s\in\mathbb{Z}$,
\begin{align}\label{obj}
\|2^s Q-W^f\|^2&  = 2^{2s}\|Q\|^2 - 2^{s+1}\langle Q,W^f \rangle + \|W^f\|^2 \notag\\
& \geq 2^{2s}\sum_{i=0}^{n-1} k_t 2^{-2t} -2^{s+1}\sum_{t=0}^{n-1} \|W_{[k_t]}^f\|_1 2^{-t}  + \|W^f\|^2 \qquad (\mbox{by  (\ref{ineq})}) \notag \\
& = (\sum_{i=0}^{n-1} k_t 2^{-2t})\left(2^s- \frac{\sum_{t=0}^{n-1} \|W_{[k_t]}^f\|_1 2^{-t}}{\sum_{i=0}^{n-1} k_t 2^{-2t}}\right)^2 - \frac{(\sum_{t=0}^{n-1} \|W_{[k_t]}^f\|_1 2^{-t})^2}{\sum_{i=0}^{n-1} k_t 2^{-2t}}+ \|W^f\|^2\notag \\
\end{align}
Since $s\in\mathbb{Z}$, by symmetry of the parabola, it suffices to find the nearest power of 2 to $\frac{\sum_{t=0}^{n-1} \|W_{[k_t]}^f\|_1 2^{-t}}{\sum_{i=0}^{n-1} k_t 2^{-2t}}$. So the lower bound in (\ref{obj}) is achieved at $s = \floor*{\log_2 \frac{4\sum_{t=0}^{n-1} \|W_{[k_t]}^f\|_1 2^{-t}}{3\sum_{i=0}^{n-1} k_t 2^{-2t}}}$. Let us define $g(u,v):= v(2^{\log_2 \floor*{\frac{4u}{3v}}} - \frac{u}{v})^2 - \frac{u^2}{v}$. Then we examine the minimum value of $g(\sum_{t=0}^{n-1} \|W_{[k_t]}^f\|_1 2^{-t},\sum_{i=0}^{n-1} k_t 2^{-2t})$ over all possible combinations of natural numbers $k_0,\dots,k_{n-1}$, i.e., the optimal numbers of quantized weights at the $n$ levels are given by
$$
(k^*_0, \dots, k^*_{n-1}) = \arg\min_{k_0,\dots,k_{n-1}\in\N} \; g\left(\sum_{t=0}^{n-1} \|W_{[k_t]}^f\|_1 2^{-t}, \sum_{i=0}^{n-1} k_t 2^{-2t}\right)
$$
Finally, to achieve the minimum in (\ref{obj}) with respect to $(k^*_0, \dots, k^*_{n-1})$, we must have
\begin{equation*}\label{min_T}
Q^* = \sum_{t = 0}^{n-1}\sgn(W^f_{[k^*_t]})2^{-t}
\end{equation*}
so that $\langle Q^*,W^f \rangle = \sum_{t=0}^{n-1} \|W_{[k^*_t]}^f\|_1 2^{-t}$, and choose
$s^*=\floor*{\log_2 \frac{4\sum_{t=0}^{n-1} \|W_{[k^*_t]}^f\|_1 2^{-t}}{3\sum_{i=0}^{n-1} k^*_t 2^{-2t}}}$.

\end{proof}



\end{document}